\newtheorem{theorem}{Theorem}
\newtheorem{proof}{Proof}
\icmltitlerunning{PDO-eConvs: Partial Differential Operator Based Equivariant Convolutions}
\begin{document}

\twocolumn[
\icmltitle{PDO-eConvs: Partial Differential Operator Based Equivariant Convolutions}




\begin{icmlauthorlist}
	\icmlauthor{Zhengyang Shen}{math}
	\icmlauthor{Lingshen He}{info}
	\icmlauthor{Zhouchen Lin}{info}
	\icmlauthor{Jinwen Ma}{math}
\end{icmlauthorlist}

\icmlaffiliation{math}{School of Mathematical Sciences and LMAM, Peking University, Beijing 100871}
\icmlaffiliation{info}{Key Lab. of Machine Perception (MoE), School of EECS, Peking University, Beijing 100871}

\icmlcorrespondingauthor{Zhouchen Lin}{zlin@pku.edu.cn}
\icmlcorrespondingauthor{Jinwen Ma}{jwma@math.pku.edu.cn}

\icmlkeywords{Machine Learning, ICML}

\vskip 0.3in
]



\printAffiliationsAndNotice{}  

\begin{abstract}
Recent research has shown that incorporating equivariance into neural network architectures is very helpful, and there have been some works investigating the equivariance of networks under group actions. However, as digital images and feature maps are on the discrete meshgrid, corresponding equivariance-preserving transformation groups are very limited. 

In this work, we deal with this issue from the connection between convolutions and partial differential operators (PDOs). In theory, assuming inputs to be smooth, we transform PDOs and propose a system which is equivariant to a much more general continuous group, the $n$-dimension Euclidean group. In implementation, we discretize the system using the numerical schemes of PDOs, deriving approximately equivariant convolutions (PDO-eConvs). Theoretically, the approximation error of PDO-eConvs is of the quadratic order. It is the first time that the error analysis is provided when the equivariance is approximate. Extensive experiments on rotated MNIST and natural image classification show that PDO-eConvs perform competitively yet use parameters much more efficiently. Particularly, compared with Wide ResNets, our methods result in better results using only $12.6\%$ parameters.
\end{abstract}

\section{Introduction \label{intro}}
In the past few years, convolutional neural network (CNN) models have become the dominant machine learning methods in the field of computer vision for various tasks, such as image recognition, objective detection and semantic segmentation. Compared with fully-connected neural networks, a significant advantage of CNNs is that they are shift equivariant: shifting an image and then feeding it through a number of layers is the same as feeding the original image and then shifting the resulted feature maps. In other words, the translation symmetry is preserved by each layer. Also, the equivariance property brings in weight sharing, with which we can use parameters more efficiently.

Motivated by this, Cohen and Welling \yrcite{cohen2016group} proposed group equivariant CNNs (G-CNNs), showing how convolutional networks can be generalized to exploit larger groups of symmetries, including rotations and reflections. G-CNNs are equivariant to the group $p4m$ or $p4$\footnote{Generally, the group $pnm$, which we will use in Section \ref{section4}, denotes the group generated by translations, reflections and rotations by $2\pi/n$. The group $pn$ denotes the group only generated by translations and rotations by $2\pi/n$.}, and work on square lattices. In addition, Hoogeboom et al. \yrcite{hoogeboom2018hexaconv} proposed HexaConv and showed how one can implement planar convolutions and group convolutions over hexagonal lattices, instead of square ones. As a result, the equivariance is expanded to $p6m$. However, it seems impossible to design CNNs that are equivariant to the rotation angles other than $\pi/2$ ($p4m$) and $\pi/3$ ($p6m$) as there does not seem to exist other rotational symmetric discrete lattices on the 2D plane, if one considers equivariance in the ways as \cite{cohen2016group} and \cite{hoogeboom2018hexaconv}.

In order to exploit more symmetries, Weiler et al. \yrcite{weiler2018learning} employed harmonics as steerable filters to achieve exact equivariance to larger transformation groups in the continuous domain. However, they are difficult to preserve strong equivariance when operating on discrete pixel grids, for two main reasons: (i) When a harmonic is sampled on grids with a low rate, it could appear as a lower harmonic, which introduces aliasing artifacts. (ii) With Gaussian radial profiles as radial functions, harmonics ranged out of the sampled kernel support, leading to a high equivariance error on implementation.

From another point of view, a conventional convolutional filter can also be viewed as a linear combination of PDOs, which was proposed by \cite{ruthotto2018deep}. With this new understanding, we assume inputs are smooth functions, and then show how to transform the PDOs and get a system which is exactly equivariant to a much more general continuous transformation group, the $n$-dimension Euclidean group. To implement our theory on discrete digital images, we discretize the system using the numerical schemes of PDOs and get approximately equivariant convolutions. Particularly, the discretized convolutions can achieve a quadratic order equivariance approximation, and it is the first time that the error analysis is provided when the equivariance is approximate. As the derived equivariant convolutions are based on PDOs, we refer to them as PDO-eConvs.

We evaluate the performance of PDO-eConvs on rotated MNIST and natural image classification tasks. Extensive experiments verify that PDO-eConv produces very competitive results and is significantly efficient on parameter learning.. 

Our contributions are as follows:

\begin{itemize}
	\item With the assumption that inputs are smooth, we use PDOs to design a system that is equivariant to a much more general continuous group, the $n$-dimensional Euclidean group.
	\item The equivariance is exact in the continuous domain. It becomes approximate only after the discretization. Moreover, it is the first time that the error analysis is provided when the equivariance is approximate. To be specific, the approximation error of PDO-eConvs is of the quadratic order, indicating a precise approximation.
	\item Extensive experiments on PDO-eConvs show that our methods perform competitively and have significant parameter efficiency. 
\end{itemize}

\section{Prior and Related Work}
\subsection{Equivariant CNNs}

Lenc \& Vedaldi \yrcite{lenc2015understanding} showed that the AlexNet CNN \cite{krizhevsky2012imagenet} trained on ImageNet spontaneously learned representations that are equivariant to flips, scalings and rotations, which supported the idea that equivariance is a good inductive bias for CNNs. Cohen \& Welling \yrcite{cohen2016group,cohen2017steerable} succeeded in incorporating equivariance into neural networks. However, these methods can only deal with a $4$-fold rotational symmetry for images with square pixels. Hoogeboom et al. \yrcite{hoogeboom2018hexaconv} alleviated this limit by implementing planar convolutions and group convolutions over hexagonal lattices. Consequently, they can deal with a $6$-fold rotational symmetry.

Since there does not seem to have more rotational symmetries on lattices in the 2D plane, some works designed approximately equivariant networks w.r.t. larger groups. Zhou et al. \yrcite{zhou2017oriented} and Marcos et al. \yrcite{marcos2017rotation} utilized bilinear interpolation to help produce feature maps at different orientations. They are inherently approximately equivariant. By comparison, ours is exactly equivariant in the continuous domain. Worral et al. \yrcite{worrall2017harmonic} used harmonics to extract features and achieve equivariance to 360-rotation, but the equivariance is destroyed after Gaussion-resampling. Weiler et al. \yrcite{weiler2018learning} and Weiler \& Cesa  \yrcite{cesa2019general} employed harmonics as steerable filters to achieve exact equivariance w.r.t. larger groups in the continuous domain, but the equivariance is difficult to preserve in the discrete domain due to aliasing artifacts and limited kernel support. So they used much larger filters to achieve approximate equivariance, resulting in CNNs with a large computational burden. By contrast, PDO-eConvs can use a relatively small kernel size to achieve theoretically guaranteed exact equivariance in the discrete domain, which makes big difference. 


There are also some empirical approaches for enforcing equivariance. A commonly utilized technique is data augmentation, see e.g. \cite{krizhevsky2012imagenet}. The basic idea is to enrich the training set by transformed samples. Laptev et al. \yrcite{laptev2016ti} used parallel siamese architectures for the considered transformation set and applying the transformation-invariant pooling (TI-Pooling) operator on their outputs. Jaderberg et al. \yrcite{jaderberg2015spatial} applied a differentiable module to actively transform feature maps, and then Esteves et al. \yrcite{esteves2018polar} used this method to help enforce equivariance under rotation and scale transformations. In \cite{sabour2017dynamic,hinton2018matrix}, capsules are used to represent the location information and enforce equivariance. However, these methods learn the transformations directly from datas, which are inferior to those methods incorporating equivariance into architectures for lack of interpretability and reliability.

\subsection{The Relationship between Convolutions and PDOs}

There have been extensive works \cite{jain1978partial,witkin1987scale-space,koenderink1984the,perona1990scale-space,osher1990feature-oriented} utilizing PDOs to process images. The relationship between convolutions and PDOs was presented in \cite{dong2017image,ruthotto2018deep}, where the authors translated convolutional filters to linear combinations of PDOs, and this approximation has good analytical properties. Some works \cite{long2018pde,long2019pde} used this new understanding to help design CNNs. Also, this relationship is an important theoretical foundation of our work.

Actually, there exist some works using PDOs to investigate equivariance. Liu et al. \yrcite{liu2013toward} designed a partial differential equation (PDE) using a linear combination of equivariant PDOs and proposed learning based PDEs, which are naturally shift and rotation equivariant. Fang et al. \yrcite{fang2017feature} further adopted this technique on face recognition task. However, the capacity of learning based PDEs cannot be compared with that of nowadays widely used CNNs.

\section{Mathematical Framework}
In this section we design a group equivariant system using PDOs. To make concepts and notations more explicit, we give a preliminary introduction of groups and equivariance formally. 

\subsection{Prior Knowledge}
\textbf{The Isometry Group}\quad In mathematics, the isometry group is a group consisted of isometry transformations, which preserve the distance of any two points. Particularly, the Euclidean group is the largest isometry group defined on $\mathbb{R}^n$, which we denote as $E(n)$. Given $y\in \mathbb{R}^n$, the isometry transformation is:
\begin{align}
y:\to Ay+x,
\label{11}
\end{align}
where $A$ is an orthogonal matrix, i.e., $A^{\top}A=I$, and $x \in\mathbb{R}^n$. When $A=I$, the transformations in (\ref{11}) compose the translation group $\langle \mathbb{R}^n;+\rangle$. Without ambiguity, we use $\mathbb{R}^n$ to denote the translation group in the following text. When $x=0$, $E(n)$ degenerates to the orthogonal group, $O(n)$, which contains all the orthogonal transformations, including reflections and rotations. We use $A$ to parameterize $O(n)$. $\mathbb{R}^n$ and $O(n)$ are both subgroups of $E(n)$, 
and $E(n)=\mathbb{R}^n \rtimes O(n)$  ($\rtimes$ is a semidirect-product). We use $(x,A)$ to represent the element in $E(n)$, where $x$ and $A$ represent a translation and an orthogonal transformation, respectively. Restricting the domain of $A$ and $x$, we can also use this representation to parametrize any subgroup of $E(n)$. 

\textbf{Actions on Functions}\quad Inputs and intermediate feature maps can be naturally modeled as functions defined in the continuous domain. To be specific, we model the input $r$ as a smooth function defined on $\mathbb{R}^n$ and the intermediate feature map $e$ as a smooth function defined on $E(n)$, where the smoothness of $e$ means that if we use the representation $(x,A)$ mentioned above, the feature map $e(x,A)$ is smooth w.r.t. $x$ when $A$ is fixed. So $e$ can also be viewed as a function defined on $\mathbb{R}^n$ with infinite channels indexed by $A$. We use $C^{\infty}(\mathbb{R}^n)$ and $C^{\infty}(E(n))$\footnote{For the simplicity of our theory, we require that $r\in C^{\infty}(\mathbb{R}^n)$. However, in implementation, we only require that $r\in C^{4}(\mathbb{R}^n)$. The requirement on $e$ is the same.} to denote the function spaces of $r$ and $e$, respectively .

In this way, transformations like rotations and reflections on inputs and feature maps can be mathematically formulated. Here, we introduce two transformations used in our theory.
\begin{itemize}
	\item Suppose that $r\in C^\infty(\mathbb{R}^n)$ and $\widetilde{A} \in O(n)$, then the transfomation $\widetilde{A}$ acts on $r$ in the following way\footnote{We use $[\cdot]$ to denote that an operator acts on a function.}: 
	\begin{align}
	\forall x \in \mathbb{R}^n,\quad \pi^{R}_{\widetilde A}[r](x)=r(\widetilde A^{-1}x).
	\label{T1}
	\end{align}
	
	\item Suppose that $e \in C^\infty(E(n))$ and $\widetilde A\in O(n)$, then $\widetilde A$ acts on $e$ in the following way: 	
	\begin{align}
	\forall a \in E(n),\quad \pi^{E}_{\widetilde A}[e](a)=e({\widetilde{A}}^{-1}a),
	\label{31}
	\end{align}
	where $\widetilde{A}^{-1}a$ is group product on $E(n)$. Using the representation of $E(n)$, it is of the
	following more detailed form:
	\begin{align}
	\pi^{E}_{\widetilde A}[e](x,A)=e( \widetilde {A}^{-1}x, \widetilde{A}^{-1} A), \label{T2}
	\end{align} 
	where $(x,A)$ is the representation of $a$.
	
\end{itemize}

\begin{figure}
	\centering
	\includegraphics[width=0.6\columnwidth]{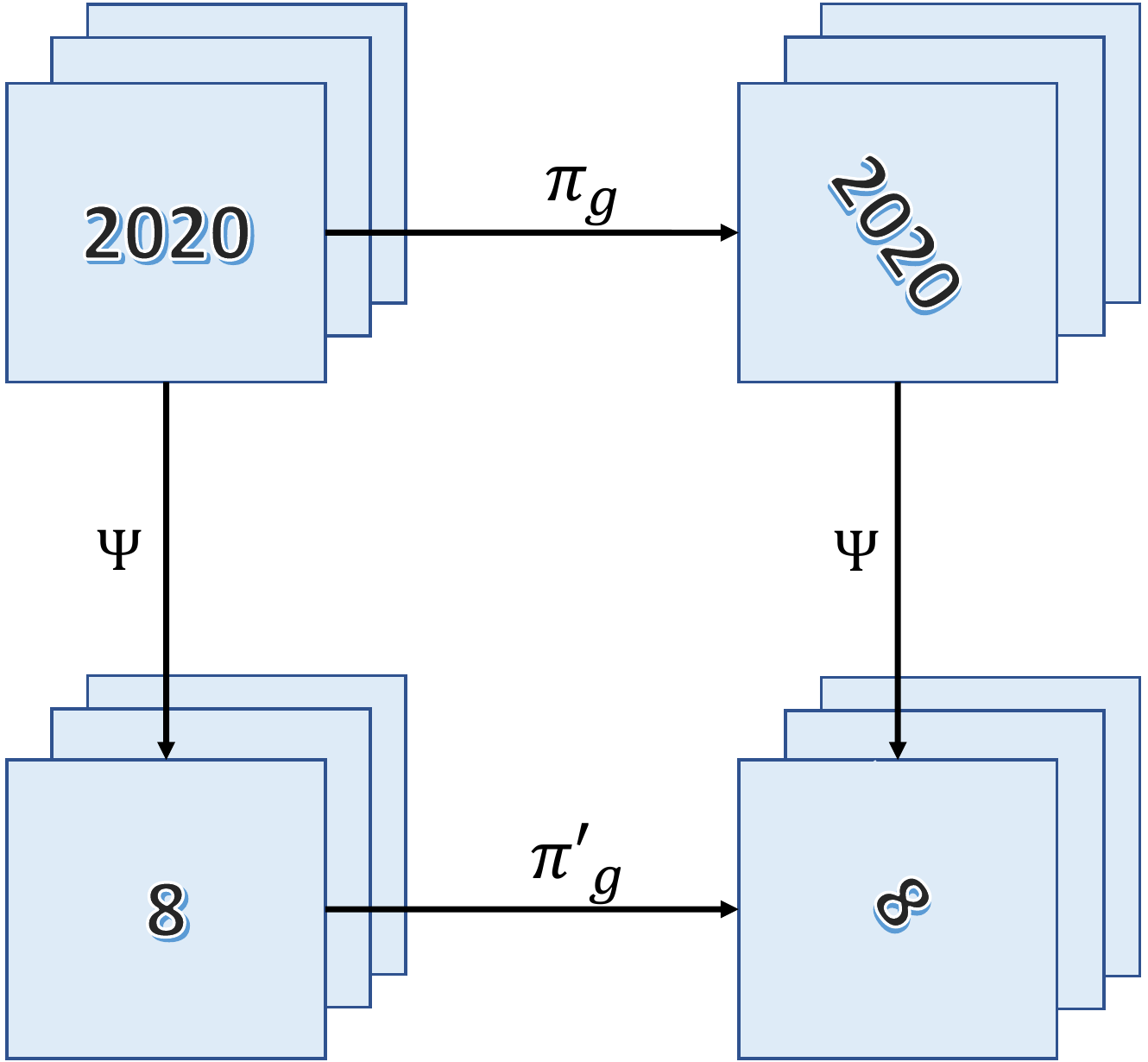} 
	\caption{The transformation $g$ can be preserved by the mapping $\Psi$.}
	\label{equivariance}
\end{figure}

\textbf{Equivariance}\quad Equivariance measures how the outputs of a mapping transform in a predictable way with the transformation of the inputs. Here, we formulate it in detail. Let $\Psi$ be a mapping from the input feature space to the output feature space and $G$ is a group. A group equivariant $\Psi$ satisfies that
\begin{align*}
\forall g \in G,\quad \Psi[\pi_g[f]]=\pi^{\prime}_g[\Psi[f]],
\end{align*}
where $f$ can be any input feature map in the input feature space, and $\pi _g$ and $\pi^{\prime}_g$ denote how the transformation $g$ acts on input features and output features, respectively. 

That is, transforming an input $f$ by a transformation $g$ (forming $\pi_g[f]$) and then passing it through the mapping $\Psi$ should give the same result as first mapping $f$ through $\Psi$ and then transforming the representation. The schema of equivariance is shown in Figure \ref{equivariance}. It is easy to see that if each layer of a network is equivariant, the equivariance can be preserved by the network.

\subsection{Group Equivariant Differential Operators}
We refer to $H(u_1,u_2,\cdots,u_n;\bm{\beta})$ as a polynomial of $n$ variables parameterized by $\bm{\beta}$. $\frac{\partial }{\partial x_i}$ denotes the derivative with respect to the $i$th coordinate of $x$. Obviously, as a polynomial of PDOs $\left\{\frac{\partial} {\partial x_i}\right\}_{i=1}^n$, $H(\frac{\partial}{\partial x_1},\frac{\partial}{\partial x_2},\dots,\frac{\partial}{\partial x_n};\bm{\beta})$ is a linear combination of PDOs parameterized by $\bm{\beta}$. For example, if $H(u_1,u_2;\bm{\beta})=\beta_1u_1+\beta_2 u_1u_2$, then $H(\frac{\partial }{\partial x_1},\frac{\partial }{\partial x_2};\bm{\beta})=\beta_1\frac{\partial}{\partial x_1}+\beta_2\frac{\partial^2}{\partial x_1\partial x_2}$. 

\subsubsection{Under Orthogonal Transformation}
We transform these PDOs with orthogonal matrices, and define the following differential operator:
\begin{align}
\chi^{(A)}=H\left(\frac{\partial}{\partial x_1^{(A)}},\frac{\partial}{\partial x_2^{(A)}}\dots,\frac{\partial}{\partial x_n^{(A)} };\bm{\beta}\right),
\label{aa}
\end{align}
where 
\begin{align}
\begin{bmatrix}
\frac{\partial}{\partial x_1^ {(A)}}\\
\frac{\partial}{\partial x_2^ {(A)}}\\
\vdots\\
\frac{\partial}{\partial x_n^{(A)}}
\end{bmatrix}
=\quad A^{-1} \quad
\begin{bmatrix}
\frac{\partial}{\partial x_1}\\
\frac{\partial}{\partial x_2}\\
\vdots\\
\frac{\partial}{\partial x_n}
\end{bmatrix},
\label{21}
\end{align}
and $A$ is an orthogonal matrix. As a compact format, we can also rewrite (\ref{21}) as 
\begin{equation}
\nabla^{(A)} = A^{-1}\nabla \label{compact},
\end{equation}
where $\nabla=[\frac{\partial}{\partial x_1},\frac{\partial}{\partial x_2},\cdots,\frac{\partial}{\partial x_n}]^T$, which is a gradient operator. Particularly, the canonical operator $\chi^{(I)}=H(\frac{\partial}{\partial x_1},\frac{\partial}{\partial x_2},\cdots,\frac{\partial}{\partial x_n};\bm{\beta})$. From another point of view, the transformation on PDOs can also be viewed as that we transform the coordinate frame according to $A$, and then conduct differential operators on the new coordinate frame (see Figure \ref{rotated}). Particularly, PDOs can be viewed as steerable filters in the sense of \cite{helor1996canonical}, because the transformed versions of PDOs can be expressed as linear combinations of PDOs.

\begin{figure}
	\centering
	\includegraphics[width=0.9\columnwidth]{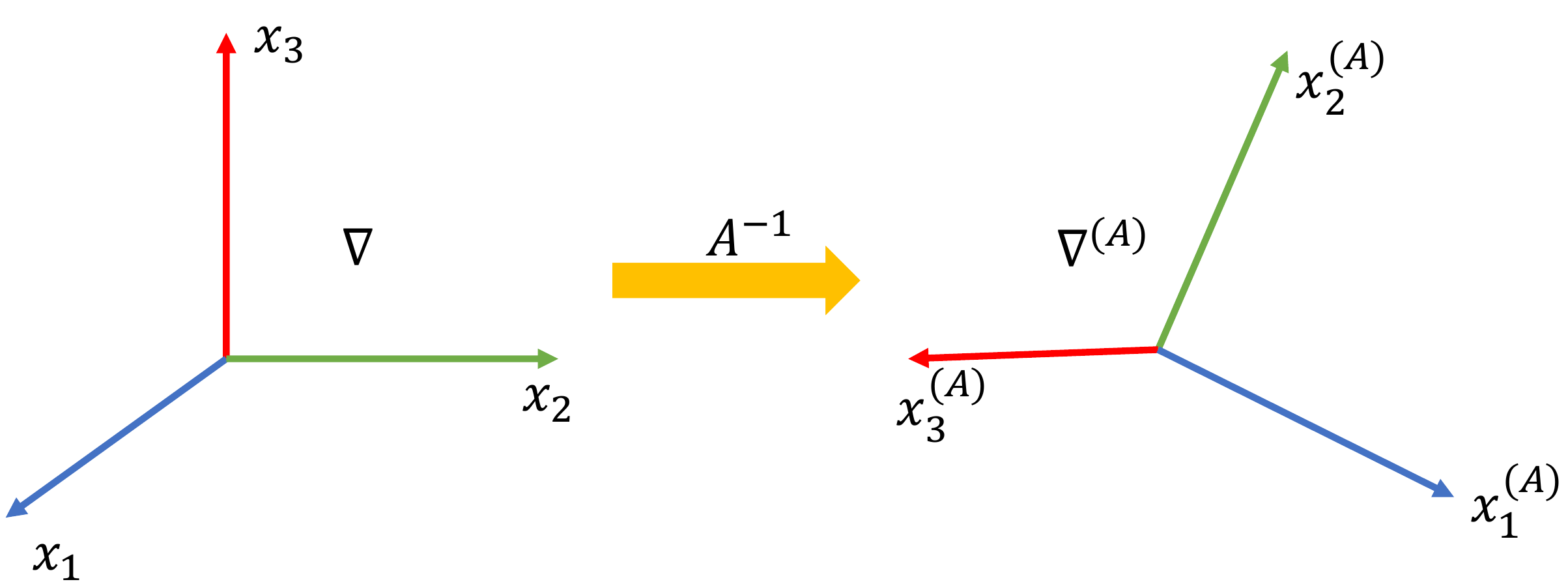} 
	\caption{Transformation over coordinate frame.}
	\label{rotated}
\end{figure}

Next, we employ $\chi^{(A)}$'s to define two differential operators $\Psi$ and $\Phi$. To be specific, we use $\Psi$ to deal with inputs, which maps an input $r\in C^\infty(\mathbb{R}^n)$ to a feature map defined on $E(n)$: $\forall (x,A)\in E(n)$,
\begin{align}
\forall (x,A)\in E(n),\quad \Psi [r](x,A)=\chi^{(A)}[r](x).
\label{1}
\end{align}
Then we use $\Phi$ to deal with the resulting feature maps, which maps one feature map $e\in C^\infty(E(n))$ to another feature map defined on $E(n)$: \\
$\forall (x,A)\in E(n)$,
\begin{align}
\Phi [e](x,A)=\int_{O(n)} \chi^{(A)}_{B}\,\,[e](x,AB) d\nu(B)\label{Phi},
\end{align}
where $B$ is an orthogonal matrix and $\nu$ is a measure on $O(n)$. As for $\chi_B^{(A)}$, we use the subscript $B$ to distinguish the differential operators parameterized by different $\bm{\beta}_B$'s. The $e$ on the right hand side should be viewed as a function defined on $\mathbb{R}^n$ indexed by $AB$ when the operator $\chi^{(A)}_B$ acts on it.

We now show that the above two operators are equivariant under orthogonal transformations and describe how the outputs transform w.r.t. the transformations of inputs.

\begin{theorem}
	If $r \in C^{\infty}(\mathbb{R}^n), e \in C^{\infty}(E(n))$ and $\widetilde{A}\in O(n)$, the following rules are satisfied:
	\begin{align}
	\Psi \left[\pi^{R}_{\widetilde A}[r]\right]=&\pi^{E}_{\widetilde A}\left[\Psi [r]\right],\label{e1}\\
	\Phi \left[\pi^{E}_{\widetilde A}[e]\right] =& \pi^{E}_{\widetilde A}\left[\Phi [e]\right],\label{e2}
	\end{align}
	where $\pi^{R}_{\widetilde A},\pi^{E}_{\widetilde A},\Psi$ and $\Phi$ are defined in (\ref{T1}), (\ref{T2}), (\ref{1}) and (\ref{Phi}), respectively.
	\label{theorem1}
\end{theorem}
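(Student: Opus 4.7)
The plan is to reduce both equivariance identities to a single pointwise chain-rule lemma for the differential operator $\chi^{(A)}$ itself: for every smooth $f\in C^\infty(\mathbb{R}^n)$ and every pair $\widetilde A,A\in O(n)$,
\begin{equation*}
\chi^{(A)}\!\left[f\circ\widetilde A^{-1}\right]\!(x)=\chi^{(\widetilde A^{-1}A)}[f]\bigl(\widetilde A^{-1}x\bigr).
\end{equation*}
Once this pointwise identity is in hand, both (\ref{e1}) and (\ref{e2}) fall out by unwinding the definitions (\ref{1}), (\ref{Phi}), (\ref{T1}) and (\ref{T2}).

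To prove the lemma I would first dispatch the first-order case. The ordinary chain rule gives $\nabla_x(f\circ \widetilde A^{-1})(x)=\widetilde A^{-\top}(\nabla f)(\widetilde A^{-1}x)=\widetilde A\,(\nabla f)(\widetilde A^{-1}x)$, where the second equality uses $\widetilde A^{-\top}=\widetilde A$ because $\widetilde A$ is orthogonal; this is the only place orthogonality enters, and is precisely what ties the construction to $O(n)$ rather than to a more general linear group. Left-multiplying by $A^{-1}$ and using (\ref{compact}) yields
\begin{equation*}
\nabla^{(A)}(f\circ\widetilde A^{-1})(x)=A^{-1}\widetilde A\,\nabla f(\widetilde A^{-1}x)=(\widetilde A^{-1}A)^{-1}\nabla f(\widetilde A^{-1}x)=\nabla^{(\widetilde A^{-1}A)}f(\widetilde A^{-1}x),
\end{equation*}
so the components of $\nabla^{(A)}$ intertwine correctly. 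Because those components are linear combinations of commuting partial derivatives, iterating this identity monomial by monomial and then summing via the linearity of $H$ in its $\bm{\beta}$-coefficients promotes the statement from $\nabla^{(A)}$ to the entire polynomial $\chi^{(A)}=H(\nabla^{(A)};\bm{\beta})$.

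With the lemma in hand, (\ref{e1}) is a one-line unwinding: $\Psi[\pi^R_{\widetilde A}[r]](x,A)=\chi^{(A)}[r\circ\widetilde A^{-1}](x)=\chi^{(\widetilde A^{-1}A)}[r](\widetilde A^{-1}x)=\Psi[r](\widetilde A^{-1}x,\widetilde A^{-1}A)=\pi^{E}_{\widetilde A}[\Psi[r]](x,A)$. For (\ref{e2}) I would fix $B\in O(n)$ and regard $y\mapsto(\pi^E_{\widetilde A}e)(y,AB)$ as the composition of the smooth map $y\mapsto e(y,\widetilde A^{-1}AB)$ with $\widetilde A^{-1}$; applying the lemma with this map playing the role of $f$ gives $\chi^{(A)}_B[\pi^E_{\widetilde A}e(\,\cdot\,,AB)](x)=\chi^{(\widetilde A^{-1}A)}_B[e(\,\cdot\,,\widetilde A^{-1}AB)](\widetilde A^{-1}x)$, and integrating over $B$ (no change of variable on $O(n)$ is required in the outer integral) reproduces exactly $\Phi[e](\widetilde A^{-1}x,\widetilde A^{-1}A)=\pi^E_{\widetilde A}[\Phi[e]](x,A)$.

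The main obstacle is bookkeeping rather than anything analytically deep: in (\ref{e2}) one must carefully separate the spatial first argument of $e$, on which $\chi^{(A)}_B$ differentiates, from the group-valued second argument, which merely labels a channel, and then check that the single element $\widetilde A$ shifts both the spatial variable and the group index in the same coherent way. Once those substitutions are written out cleanly, the lemma applies verbatim, no invariant measure theory on $O(n)$ is invoked, and the rest is routine algebra.
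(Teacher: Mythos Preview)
Your proposal is correct and follows essentially the same route as the paper: both isolate the pointwise chain-rule identity $\chi^{(A)}[f\circ\widetilde A^{-1}](x)=\chi^{(\widetilde A^{-1}A)}[f](\widetilde A^{-1}x)$, prove it first for $\nabla^{(A)}$ via the chain rule and the orthogonality relation $\widetilde A^{-\top}=\widetilde A$, then promote it to arbitrary monomials by iteration and to $H$ by linearity, and finally unwind the definitions of $\Psi$ and $\Phi$. Your observation that no change of variable in the $O(n)$ integral is needed is exactly right and matches what the paper does implicitly.
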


\begin{proof}
	To prove (\ref{e1}), we need to prove that $\forall x\in \mathbb{R}^n,A\in O(n)$,
	\begin{align}
	\chi^{(A)}\left[\pi_{\widetilde A}^R[r]\right](x) &= \pi_{\widetilde A}^E\left[\chi^{(A)}[r](x)\right]\notag\\
	& = \chi^{(\tilde A^{-1}A)}[r](\widetilde A^{-1}x).\label{detail}
	\end{align}
	We first show that
	\small{
		\begin{align*}
		\nabla^{(A)} \left[\pi^{R}_{\widetilde A}[r]\right](x)
		& = (A^{-1}\nabla) \left[\pi^{R}_{\widetilde A}[r]\right](x)\\
		&=(A^{-1}\nabla) \left[r( \widetilde A^{-1}x)\right]\\
		&=A^{-1}\widetilde A\nabla [r] (\widetilde{A}^{-1}x)\\
		&=(\widetilde{A}^{-1}A)^{-1}\nabla [r] (\widetilde{A}^{-1}x)\\
		&=\nabla^{(\widetilde{A}^{-1}A)}[r](\widetilde{A}^{-1}x).
		\end{align*}}
	The derivation from the third line to the fourth line is due to the orthogonality of $\widetilde{A}$. Thus for any element $x_i$ in $x$, we have
	\begin{equation*}
	\frac{\partial}{\partial x_i^{(A)}} \left[\pi^{R}_{\widetilde A}[r]\right](x)= \frac{\partial}{\partial x_i^{(\tilde A^{-1}A)}}[r] (\tilde A^{-1}x).
	\end{equation*}
	Furthermore,
	\begin{align*}
	&\nabla^{(A)}\left[\frac{\partial}{\partial x_i^{(A)}} \left[\pi^{R}_{\widetilde A}[r]\right]\right](x)\\
	=&A^{-1}\nabla\left[\frac{\partial}{\partial x_i^{(\widetilde{A}^{-1}A)}} [r](\widetilde{A}^{-1}x)\right]\\
	=&(\widetilde{A}^{-1}A)^{-1}\nabla\left[\frac{\partial}{\partial x_i^{(\widetilde{A}^{-1}A)}} [r]\right](\widetilde{A}^{-1}x)\\
	=&\nabla^{(\widetilde{A}^{-1}A)}\left[\frac{\partial}{\partial x_i^{(\widetilde{A}^{-1}A)}} [r]\right](\widetilde{A}^{-1}x).
	\end{align*}
	Then we have that for any elements $x_i$ and $x_j$ in $x$,
	\begin{equation*}
	\frac{\partial}{\partial x_i^{(A)}}\frac{\partial}{\partial x_j^{(A)}} \left[\pi^{R}_{\widetilde A}[r]\right](x)= \frac{\partial}{\partial x_i^{(\tilde A^{-1}A)}}\frac{\partial}{\partial x_j^{(\tilde A^{-1}A)}}[r] (\tilde A^{-1}x).
	\end{equation*}
	In this way, it is easy to prove that (\ref{detail}) is satisfied for all the differential operator terms in $\chi^{(\cdot)}$. Finally, as $\chi^{(\cdot)}$ is a linear combination of above terms, (\ref{detail}) is satisfied. Easily, (\ref{e1}) is satisfied.
	
	As for (\ref{e2}), similarly, $\forall x\in \mathbb{R}^n,A\in O(n)$,
	\begin{align*}
	\Phi \left[\pi^{E}_{\widetilde{A}}[e]\right](x,A) &= \Phi \left[e(\widetilde{A}^{-1}x,\widetilde{A}^{-1}A)\right] \\
	&= \int_{O(n)} \chi_B^{(A)} \left[e(\widetilde{A}^{-1}x,\widetilde{A}^{-1}AB)\right]d\nu (B)\\
	&= \int_{O(n)} \chi_B^{(A)} \left[\pi^{R}_{\widetilde{A}} [e](x,\widetilde{A}^{-1}AB)\right]d\nu (B)\\
	&= \int_{O(n)} \chi_B^{(\widetilde A^{-1}A)} [e](\widetilde A^{-1}x,\widetilde{A}^{-1}AB)  d\nu (B)\\
	& = \pi^{E}_{\widetilde A} \left[\int_{O(n)} \chi_B^{(A)}  [e](x,AB)d\nu(B)\right]\\
	& = \pi^{E}_{\widetilde{A}} [\Psi [e]] (x,A).
	\end{align*}
	The derivation from the third line to the fourth line is due to (\ref{detail}). So (\ref{e2}) is satisfied.$\hfill\blacksquare$  
\end{proof}

Furthermore, as differential operators are naturally translation-equivariant, it is easy to verify that $\Psi$ and $\Phi$ are also equivariant over $E(n)$. Consequently, according to the working spaces, we set a $\Psi$ as the first layer,
followed by multiple $\Phi$'s, inserted by pointwise nonlinearities, e.g., ReLUs, that do not disturb the equivariance. Finally, we can get a system where equivariance can be preserved across multiple layers.

\subsubsection{Under Subgroup of Orthogonal Transformation \label{subgroup}}
The above theorem can be easily extended to subgroups of $E(n)$. Here we consider a subgroup $\tilde{E}(n)$ with the form $\mathbb{R}^n \rtimes S$, where $S$ is a subgroup of $O(n)$. 
Similarly, we denote the smooth feature map defined on $\tilde{E}(n)$ as $\tilde{e}$ and the function space as $C^{\infty}(\tilde{E}(n))$.

The definition of the differential operator $\Psi^S$ is the similar with (\ref{1}):
\begin{align}
\forall (x,A)\in \tilde{E}(n),\quad \Psi^S [r](x,A)=\chi^{(A)}[r](x),
\label{5}
\end{align}
where the only difference is that $A \in S$. If $S$ is a discrete group, the differential operator $\Phi^S$ is:
\begin{align}
\forall (x,A)\in \tilde{E}(n),\quad\Phi^S [\tilde e] (x,A)=\sum\limits_{B\in S} \chi^{(A)}_{B} [\tilde e](x,AB),
\label{6}
\end{align}
where $A \in S$. Following (\ref{T1}) and (\ref{T2}), we can define $\pi^{R}_{\widetilde A}$ and $\pi^{\tilde{E}}_{\widetilde A}$, where $\tilde A \in S$. We can get the similar result:
\begin{align}
\Psi^S \left[\pi^{R}_{\widetilde A}[r]\right]=&\pi^{\tilde{E}}_{\widetilde A}\left[\Psi^S[r]\right],\\
\Phi^S \left[\pi^{\tilde{E}}_{\widetilde A}[\tilde e]\right]=&\pi^{\tilde E}_{\widetilde A}\left[\Phi^S [\tilde e]\right]. \label{8}
\end{align}
Easily, they are also equivariant w.r.t. $\tilde E(n)$.

\section{PDO-eConvs \label{section4}}
In this section, we apply our theory to $2$D digital images, and derive approximately equivariant convolutions in the discrete domain. As they are designed using PDOs, we refer to them as PDO-eConvs. To begin with, we show how to apply PDOs on discrete images and feature maps with convolutional filters, respectively.

\subsection{Differential Operators Acting on Discrete Features}

We can view discrete digital images as samples from smooth functions defined on the 2D plane. Formally, we assume that an image data $\bm{I}\in \mathbb{R}^{n\times n}$ represents a two-dimensional grid function obtained by discretizing a smooth function $r:[0,1]\times [0,1]:\to \mathbb{R}$ at the cell-centers of a regular grid with $n\times n$ cells and a mesh size $h=1/n$, i.e., for $i,j=1,2,\dots,n,$
\begin{align*}
\bm{I}_{i,j}=r(x_i,y_j), 
\end{align*}
where $x_i=(i-\frac{1}{2})h$ and $y_j=(j-\frac{1}{2})h$. 

Accordingly, intermediate feature maps in CNNs are multi-channel matrices. Similarly, 
it can be seen as the  discretizations of continuous functions defined on $\tilde E$, where $\tilde E =\mathbb{R}^2\rtimes S$ and $S$ is a subgroup of $O(2)$. Formally, a feature map $\bm{F}$ represents a three-dimensional grid function sampled from a smooth function $e:[0,1]^2\times S:\to \mathbb{R}$.
For $i,j=1,2,\dots n$,
\begin{align}
\bm{F}^k_{i,j}=e(x_i,y_j,k),
\end{align}
where $x_i=(i-\frac{1}{2})h,y_j=(j-\frac{1}{2})h$ and $k\in S$ which represents its channel index. Here, for ease of presentation, we only consider that inputs and intermediate feature maps are all single-valued functions, and the theory can be easily extended to multi-valued functions.

With the understanding that features are sampled from continuous functions, we can implement differential operations on features. Particularly, we use convolutions to approximate differential operations, which have been widely used in image processing. For example, the operator$\frac{\partial }{\partial x}$ acting on images and feature maps can be approximated by the following $3\times3$ convolutional filter with quadratic precision:
\begin{align}
\frac{\partial }{\partial x}[r](x_i,y_j)&=\left(\frac{1}{2h}\begin{bmatrix}
0&0&0\\
-1&0&1\\
0&0&0
\end{bmatrix}\ast \bm{I}\right)_{i,j}+O(h^2),\notag\\
\frac{\partial }{\partial x}[e](x_i,y_j,k)&=\left(\frac{1}{2h}\begin{bmatrix}
0&0&0\\
-1&0&1\\
0&0&0
\end{bmatrix}\ast \bm{F}^k\right)_{i,j}+O(h^2),\notag
\label{9}
\end{align}
where $\ast$ denotes the convolution operation.

\subsection{From Group Equivariant Differential Operators to PDO-eConvs}
Firstly, we choose the polynomial $H$ from the connection between differential operators and convolutions. Ruthotto \& Haber \yrcite{ruthotto2018deep} showed that we can relate a $3\times3$ convolutional filter to a differential operator, $\mathcal{D}$, which is a linear combination of $9$ linearly independent PDOs\footnote{For ease of presentation, we denote the identity operator as $\partial_0$, and view it as a  special PDO.}.
\begin{align}
\mathcal{D}=&\beta_1\partial_0+\beta_2 \partial_{x}+\beta_3 \partial_{y}+\beta_4 \partial_{xx}+\beta_5\partial_{xy}\\
&+\beta_6\partial_{yy}+\beta_7\partial_{xxy}+\beta_8\partial_{xyy}+\beta_9\partial_{xxyy}.\notag
\label{combination}
\end{align}
In addition, we observe that all differential operators in (\ref{combination}) can be approximated using $3\times3$ convolutional filters (see Supplementary Material 1.1) with quadratic precision. It is to say that we can always approximate the differential operators defined in (\ref{combination}) using a $3\times 3$ filter with quadratic precision. For this reason, we choose 
\begin{align}
H(u,v;\bm{\beta})=&\beta_1+\beta_2u+\beta_3v+\beta_4u^2+\beta_5uv\\
&+\beta_6v^2+\beta_7u^2v+\beta_8uv^2+\beta_9u^2v^2.\notag
\end{align}
In this way, $\mathcal{D}$ equals $\chi^{(I)}$, which is also the canonical differential operator of $\chi^{(A)}$'s, indexed by the identity matrix. Using the transformation in (\ref{21}), we can calculate all the expressions of $\chi^{(A)}$'s easily. Particularly, these transformed differential operators share the same parameters $\bm{\beta}$, indicating greater parameter efficiency.

In computation, we observe that some new partial derivatives, e.g., $\partial_{xxx},\partial_{xxxx}$, may occur in some $\chi^{(A)}$'s, where $A\in S$. Fortunately, the orders of these new partial derivatives are all below five, and we can use the filters with the size of $5\times 5$ (see Supplementary Material 1.2) to approximate them with quadratic precision. 

Now we investigate the group we use. According to (\ref{Phi}) and (\ref{6}), if $S$ is a continuous group, we need to conduct integration. However, for the computation issue, it seems impossible to consider all the orthogonal transformations in $O(2)$.
So we consider $S$ to be a discrete subgroup of $O(2)$. Still, our theory is satisfied for feature maps defined on $\tilde{E}$ (see Section \ref{subgroup}). 
Particularly, noting that $O(2)$ is generated by reflections and rotations, we set the subgroup $S$ to be generated by reflections and rotations by $2\pi/n$. 
As a result, $\tilde{E}=pnm$. If without reflections, $\tilde{E}=pn$. Discrete groups $pnm$ and $pn$ have been introduced in Section \ref{intro}.

Finally, we discretize the equivariant differential operator $\Psi$ with corresponding convolutional filters. As a result, we can get a new operator, $\tilde{\Psi}$, which is actually a set of convolution operators indexed by $A$:
\begin{equation}
\forall A\in S, \quad \tilde{\chi}^{(A)}=\sum_{i\in \Gamma} C^{(A)}_i \tilde u_i, \label{20}
\end{equation}
where $\Gamma$ indexes all the filters we use, $C_i^{(A)}$ are derived by substituting (\ref{21}) into (\ref{aa}) and $\tilde u_i$ is the convolutional filter related to the PDO $\partial_i$ (e.g., $\tilde u_0$ and $\tilde u_{xy}$ are related to $\partial_0$ and $\partial_{xy}$, respectively), then 
\begin{equation}
(\widetilde{\Psi}\ast \bm{I})^A=\tilde{\chi}^{(A)}\ast \bm{I}.
\end{equation}

Similarly, we can get a new convolution operator $\tilde{\Phi}$ by discretizing (\ref{6}). Without ambiguity, we also use $*$ to denote the corresponding convolution operation. To be specific,
\begin{align}
\forall A\in S, \quad \left(\tilde{\Phi}\ast \bm{F}\right)^A=\sum\limits_{k\in S}\tilde{\chi}^{(A)}_k \ast \bm{F}^{Ak},\label{Phiconv}
\end{align}
where $Ak$ is a group product on the group $S$, which respresents the channel index of $\bm{F}$, and $\bm{F}^{Ak}\in \mathbb{R}^{n\times n}$.

We refer to $\tilde{\Psi}$ and $\tilde{\Phi}$ as PDO-eConvs, because they are equivariant convolutions based on PDOs. Following \cite{cohen2016group}, we replace all the conventional convolutions in an existing CNN with our PDO-eConvs, and get the corresponding group equivariant CNN w.r.t. $\tilde{E}$.

\begin{figure}[t]
	\centering
	\includegraphics[width=0.5\columnwidth]{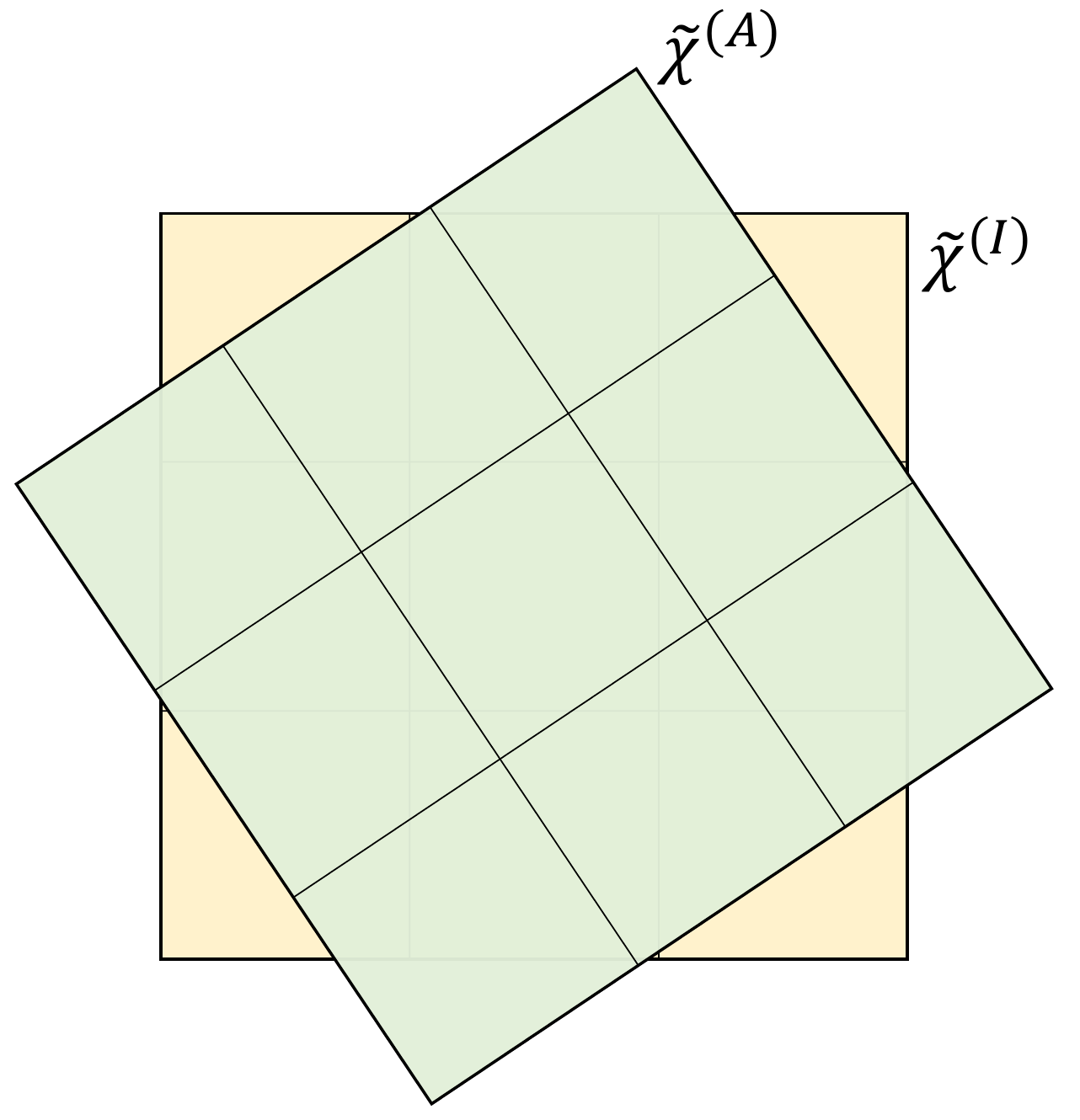} 
	\caption{The canonical convolutional filter $\tilde\chi^{(I)}$ and its rotated version $\tilde\chi^{(A)}$.}
	\label{filters}
\end{figure}
Let us have a more detailed look at (\ref{20}). Some convolutional filters like $u_{xxxx}$ are of size $5\times 5$, thus for some $A\in \tilde{E}$, $\tilde{\chi}^{(A)}$ is also of size $5\times 5$, while the canonical convolutional filter $\tilde{\chi}^{(I)}$ is of size $3\times 3$. We can explain the phenomenon in this way. By definition, the differential operator $\chi^{(A)}$ is transformed from $\chi^{(I)}$. Intuitively, we can also view the convolutional filter $\tilde{\chi}^{(A)}$ as a transformed version of $\tilde{\chi}^{(I)}$. We assume the transformation to be the rotation. As shown in Figure \ref{filters}, $\tilde \chi^{(A)}$ is a rotated version of $\tilde\chi^{(I)}$, which overflows the original $3\times 3$ area. So it makes sense to use a larger filter to represent some transformed filters.
That $5\times5$ is sufficient is because the rotated $3\times3$ mask can always be covered by a $5\times5$ square, noting that $5 \geq 3\sqrt{2}$.

\subsection{Approximation Error of Equivariance}
When we discretize the differential operators $\Psi$ and $\Phi$, errors occur, leading to equivariance disturbance. Nonetheless, we can still achieve approximate equivariance. Here, we analyze the approximation error of our PDO-eConvs. 

%

\begin{theorem}
	$\forall \tilde{A}\in S$,
	\begin{align}
	\tilde{\Psi} \ast\pi^R_{\widetilde A}[\bm{I}]&=\pi^{\tilde E}_{\widetilde A}\left[\tilde{\Psi}\ast\bm{I}\right]+O(h^2),\label{appro}\\
	\tilde{\Phi}\ast \pi^{\tilde E}_{\widetilde A}[\bm{F}]&=\pi^{\tilde E}_{\widetilde A}\left[\tilde{\Phi}\ast\bm{F}\right]+O(h^2),
	\label{14}
	\end{align}
	where transformations such as rotations or mirror reflections acting on images are defined as $(\pi^R_{\widetilde A}[\bm{I}])_{i,j}=(\pi^R_{\widetilde A}[r])(x_i,y_j)$ and transformations acting on feature maps are $(\pi^{\tilde E}_{\widetilde A}[\bm{F}])^k_{i,j}=(\pi^{\tilde E}_{\widetilde A}[e])(x_i,y_j,k)$.
\end{theorem}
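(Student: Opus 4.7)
The plan is to reduce the discrete approximate equivariance to the exact continuous equivariance of Theorem~\ref{theorem1}. Two ingredients will do the bridging. First, by construction every discrete operator $\tilde{\chi}^{(A)}$ is assembled from $3\times 3$ and $5\times 5$ stencils that approximate their respective PDOs with quadratic precision, so for any sufficiently smooth $r$ (resp.\ $e$) and any $A,B\in S$ one has
\begin{align*}
(\tilde{\chi}^{(A)}\ast \bm{I})_{i,j} &= \chi^{(A)}[r](x_i,y_j)+O(h^2),\\
(\tilde{\chi}^{(A)}_B\ast \bm{F}^{AB})_{i,j} &= \chi^{(A)}_B[e](x_i,y_j,AB)+O(h^2).
\end{align*}
Second, the theorem's own sampling definitions $(\pi^R_{\tilde A}[\bm{I}])_{i,j}=(\pi^R_{\tilde A}[r])(x_i,y_j)$ and $(\pi^{\tilde E}_{\tilde A}[\bm{F}])^k_{i,j}=(\pi^{\tilde E}_{\tilde A}[e])(x_i,y_j,k)$ let us move between discrete and continuous descriptions without loss.

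For the first identity (\ref{appro}), I would compute the LHS componentwise. Applying the $O(h^2)$ stencil bound to the $C^4$ function $\pi^R_{\tilde A}[r]$ gives
\begin{align*}
\bigl(\tilde{\Psi}\ast\pi^R_{\tilde A}[\bm{I}]\bigr)^A_{i,j}
=\chi^{(A)}\bigl[\pi^R_{\tilde A}[r]\bigr](x_i,y_j)+O(h^2).
\end{align*}
By equation (\ref{detail}) from the proof of Theorem~\ref{theorem1}, the main term equals $\chi^{(\tilde A^{-1}A)}[r](\tilde A^{-1}(x_i,y_j))$. Starting from the RHS instead, one gets $\bigl(\pi^{\tilde E}_{\tilde A}[\tilde{\Psi}\ast \bm{I}]\bigr)^A_{i,j}=\Psi[r](\tilde A^{-1}(x_i,y_j),\tilde A^{-1}A)+O(h^2)$, which by definition of $\Psi$ is the same continuous value. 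Matching the two expressions closes the first identity.

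For the second identity (\ref{14}), I would run the same argument but starting from definition (\ref{Phiconv}). Each summand $\tilde{\chi}^{(A)}_k\ast \bm{F}^{Ak}$ approximates $\chi^{(A)}_k[e](\,\cdot\,,Ak)$ with $O(h^2)$ error, so after applying $\pi^{\tilde E}_{\tilde A}$ componentwise and using the second equivariance relation of Theorem~\ref{theorem1} (together with (\ref{detail})) one again sees both sides equal $\Phi[e](\tilde A^{-1}(x_i,y_j),\tilde A^{-1}A)$ up to $O(h^2)$. Because $S$ is finite, summing a constant number of $O(h^2)$ residues stays $O(h^2)$.

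The main obstacle I anticipate is not algebraic but bookkeeping: one must be careful that the point $\tilde A^{-1}(x_i,y_j)$ used on the RHS need not be a grid point, so $\pi^{\tilde E}_{\tilde A}$ on discrete maps has to be interpreted through the continuous avatar provided by the sampling convention above, and the implicit constants in the $O(h^2)$ terms depend on a $C^4$ bound on $r$ (or $e$), which survives multiplication by the orthogonal $\tilde A$. Once that interpretation is fixed, no nontrivial analysis is needed: the proof is essentially a tight diagram chase composing (i) stencil accuracy and (ii) continuous equivariance.
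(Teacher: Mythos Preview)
Your proposal is correct and follows essentially the same route as the paper: both combine the $O(h^2)$ stencil accuracy of $\tilde{\chi}^{(A)}$ with the continuous equivariance of Theorem~\ref{theorem1} to sandwich the discrete LHS and RHS around a common continuous value. If anything, you are more explicit than the paper about the one delicate point---that $\tilde A^{-1}(x_i,y_j)$ need not be a grid node and so $\pi^{\tilde E}_{\tilde A}$ on discrete arrays must be read via the sampling convention---whereas the paper dispatches this step with ``Easily, we have''.
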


\begin{proof}
	$\forall A\in S$, the operator ${\chi}^{(A)}$ is a linear combination of differential operators and $\tilde{\chi}^{(A)}$ is a combination
	of corresponding convolution operators. Hence if $r$ is a smooth function,
	\begin{align}
	\chi^{(A)} [r](x_i,y_j) =&\left(\tilde{\chi}^{(A)} \ast \bm{I}\right)_{i,j}+O(h^2),\notag\\
	\chi^{(A)} \left[\pi^R_{\widetilde A}[r]\right](x_i,y_j) =&\left(\tilde{\chi}^{(A)} \ast \pi^R_{\widetilde A}[\bm{I}]\right)_{i,j}+O(h^2),\notag
	\end{align}
	i.e.,
	\begin{align}
	\Psi [r](x_i,y_j,A) =&\left(\tilde{\Psi} \ast \bm{I}\right)^A_{i,j}+O(h^2),\notag\\
	\Psi \left[\pi^R_{\widetilde A}[r]\right](x_i,y_j,A) =&\left(\tilde{\Psi} \ast \pi^R_{\widetilde A}[\bm{I}]\right)^A_{i,j}+O(h^2).\label{l1}
	\end{align}
	Easily, we have
	\begin{equation}
	\pi^{\tilde E}_{\widetilde A}\left[\Psi [r]\right](x_i,y_j,A) =\left( \pi^{\tilde E}_{\widetilde A}\left[\tilde{\Psi} \ast \bm{I}\right]\right)^A_{i,j}+O(h^2).\label{l2}
	\end{equation}
	From (\ref{e1}) we know that the left hand sides of (\ref{l1}) and (\ref{l2}) equal, hence the right hand sides of the two equation are the same, which results in (\ref{appro}). We can prove (\ref{14}) analogously.$\hfill\blacksquare$  
\end{proof}

\subsection{Weight Initialization Scheme \label{initialize}}
An important practical issue in the training phase is an appropriate initialization of weights.
When the variances of weights are chosen too high or too low, 
the signals propagating through the network are amplified or suppressed exponentially with depth. 
Glorot \& Bengio \yrcite{glorot2010understanding} and He et al. \yrcite{he2015delving} investigated this problem and proposed widely used initialization schemes. However, our filters are not parameterized in a pixel basis but as linear combinations of several PDOs,
thus the above-mentioned initialization schemes cannot directly be adopted for our PDO-eConvs.

To be specific, we consider the canonical filter $\tilde \chi^{(I)}$ in each PDO-eConv, and initialize it with He's initialization scheme \cite{he2015delving}. Then we initialize the parameters $\bm{\beta}$ of the PDO-eConv by solving the linear equation
\begin{align}
\tilde \chi^{(I)}=&\beta_1 \tilde u_0+\beta_2 \tilde u_{x}+\beta_3 \tilde u_{y}+\beta_4 \tilde u_{xx}+\beta_5 \tilde u_{xy}\\
&+\beta_6 \tilde u_{yy}+\beta_7 \tilde u_{xxy}+\beta_8 \tilde u_{xyy}+\beta_9 \tilde u_{xxyy}.\notag
\end{align}
with the initialized $\tilde \chi^{(I)}$. In this way, the canonical filter is initialized with He's initialization scheme. Since other filters are obtained by transforming the canonical filters, they also have appropriate variances. We initialize each $\tilde \Psi_k$ in (\ref{Phiconv}) in the same way. We use this method to initialize all the PDO-eConvs in experiments and all the experiments are implemented using Tensorflow.

\section{Experiments}
\subsection{Rotated MNIST}

The most commonly used dataset for validating rotation-equivariant algorithms is MNIST-rot-12k \cite{larochelle2007empirical}. It contains the handwritten digits of the classical MNIST, rotated by a random angle from $0$ to $2\pi$ (full angle). This dataset contains 12,000 training images and 50,000 test images, respectively. We randomly select 2,000 training images as a validation set. We choose the model with the lowest validation error during training. For preprocessing, we normalize the images using the channel means and standard deviations.

\textbf{Without Data Augmentation}\quad Firstly, we evaluate the performance of PDO-eConvs on MNIST-rot-12k without data augmentation via the CNN architecture used in \cite{cohen2016group}. It contains $6$ layers of $3\times 3$ convolutions, $20$ channels in each layer, ReLU functions, batch normalization \cite{ioffe2015batch}, and max pooling after layer $2$. 

We consider the group $p8$ and replace each convolution by a $p8$-convolution, divided the number of filters by $\sqrt{8}$, in order to keep the numbers of parameters nearly the same. Thus we use $7$ filters on each layer. Particularly, batch normalization should be implemented with a single scale and a single bias per PDO-eConv map to preserve equivariance. 

The model is trained using the Adam algorithm \cite{kingma2014adam} with a weight decay of $0.01$. We use the weight initialization method introduced in Section \ref{initialize} for PDO-eConvs and Xavier initialization \cite{glorot2010understanding} for the fully connected layer. We train using batch size $128$ for $200$ epochs. The initial learning rate is set to $0.001$ and is divided by $10$ at $50\%$ and $75\%$ of the total number of training epochs. We set the dropout rate as $0.2$.
\begin{table}[t]
	\caption{Error rates on MNIST-rot-12k without data augmentation.}\smallskip
	\centering
	\linespread{1.2}\selectfont
	\resizebox{0.8\columnwidth}{!}{
		\smallskip\begin{tabular}{lcc}
			\hline
			Network & Test Error ($\%$) & params\\
			\hline
			ScatNet-2 \cite{bruna2013invariant} & 7.48 & -\\
			PCANet-2 \cite{chan2015pcanet} & 7.37 & -\\
			TIRBM \cite{sohn2012learning} & 4.2 & -\\
			\hline
			ORN-8 (ORNAlign) \cite{zhou2017oriented} & 2.25 & 0.53M\\
			TI-Pooling \cite{laptev2016ti} & 2.2 & 13.3M\\
			\hline
			CNN & 5.03 & 22k\\
			G-CNN \cite{cohen2016group}& 2.28 & 25k\\
			\textbf{PDO-eConv (ours)} & \textbf{1.87} & 26k\\
			\hline
		\end{tabular}
	}
	\label{rot}
\end{table}

As shown in Table \ref{rot}, with comparable numbers of parameters, our proposed PDO-eConv achieves $1.87\%$ test error, outperforming conventional CNN ($5.03\%$) and G-CNN ($2.28\%$), which is equivariant on group $p4$. This is mainly because that our model is rotation-equivariant w.r.t. smaller rotation angles, which brings in better generalization. ORN-$8$ also deals with an $8$-fold rotational symmetry and adopts an extra strategy, ORNAlign, to refine feature maps. Compared with ORN-8 (ORNAlign), our method still results in lower test error, using far fewer numbers of parameters (26k vs. 0.53M). TI-Pooling is a representative model of transformation-invariant CNNs, which use parallel siamese architectures. Compared with it, PDO-eConv performs better ($1.87\%$ vs. $2.2\%$) using far fewer parameters (26k vs. 13.3M) and has much lower computational complexity.

\begin{table}[b]
	\caption{Competitive results on MNIST-rot-12k.}\smallskip
	\centering
	\linespread{1.2}\selectfont
	\resizebox{0.8\columnwidth}{!}{
		\smallskip\begin{tabular}{lcc}
			\hline
			Method & Test Error ($\%$) \\
			\hline
			H-Net \cite{worrall2017harmonic} & 1.69\\
			OR-TIPooling \cite{zhou2017oriented} & 1.54\\
			RotEqNet \cite{marcos2017rotation} & 1.09 \\
			PTN-CNN \cite{esteves2018polar} & 0.89\\
			E2CNN \cite{cesa2019general} & 0.716 \\
			SFCNN \cite{weiler2018learning} & 0.714\\		
			\hline
			\textbf{PDO-eConv (ours)} & \textbf{0.709} \\
			\hline
		\end{tabular}
	}
	\label{sota}
\end{table}

\textbf{Competitive Result with Data Augmentation}\quad We compare the performance of our PDO-eConv with some more competitive models, using data augmentation and a larger model with $7$ layers. These layers have 16, 16, 32, 32, 32, 64 and 64 output channels, respectively. We use spatial pooling and orientation pooling after the final PDO-eConv layer, in order to get rotation-invariant features. Following \cite{weiler2018learning}, we augment the dataset with continuous rotations during training time. This model is trained using stochastic gradient descent (SGD) and a Nesterov momentum \cite{sutskever2013importance} of $0.9$ without dampening. We train this model for $300$ epochs, starting with a learning rate of $10^{-2}$ and reducing it gradually to $10^{-5}$.

As shown in Table \ref{sota}, E2CNN and SFCNN achieve $0.716\%$ and $0.714\%$ test error on rotated MNIST, respectively. Compared with SFCNN, our method achieves a comparable result, $0.709\%$ test error, using only $10\%$ parameters. To be specific, our method uses 0.65M parameters, while SFCNN needs 6.5M parameters. Also, SFCNN used a much larger architecture and larger kernel sizes ($7\times 7$ and $9\times 9$), which relate to a much larger computational cost. E2CNN replicates the architecture used in SFCNN, so it also relates to a huge computational cost.

\subsection{Natural Image Classfication}
Although most objects in natural scene images are up-right, rotations could exist in small scales. Besides, equivariance to a transformation group brings in more parameter sharing, which may improve the parameter efficiency. Here we evaluate the performance of our PDO-eConvs on two common natural image datasets, CIFAR-10 (C10) and CIFAR-100 (C100) \cite{krizhevsky2009learning}, respectively.

The two CIFAR datasets consist of colored natural images with $32\times 32$ pixels. C10 consists of images drawn from 10 classes and C100 from 100. The training and the test sets contain 50,000 and 10,000 images, respectively. We randomly select 5,000 training images as a validation set. We choose the model with the lowest validation error during training. We adopt a standard data augmentation scheme (mirroring/shifting) \cite{lee2015deeply} that is widely used for these two datasets. For preprocessing, we normalize the images using the channel means and standard deviations.

To evaluate our method, we take ResNet \cite{he2016identity} as the basic model, which consists of an initial convolution layer, followed by three stages of $2n$ convolution layers using $k_i$ filters at stage $i$, followed by a final classification layer ($6n + 2$ layers in total). We replace all convolution layers of ResNets by our PDO-eConvs and implement batch normalization with a single scale and a single bias per PDO-eConv map. Also, we scale the number of filters to keep the numbers of parameters approximately the same. All the models are trained using SGD and a Nesterov momentum \cite{sutskever2013importance} of $0.9$ without dampening. We train using batch size $128$ for $300$ epochs, weight decay of $0.001$. The initial learning rate is set to $0.1$ and is divided by $10$ at $50\%$ and $75\%$ of the total number of training epochs. Similarly, we use the weight initialization method introduced in Section \ref{initialize} for our PDO-eConvs and Xavier initialization for the fully connected layer. We report the results of our methods in Table \ref{natural images}.

\begin{table}
	\caption{Results on the natural image classification benchmark. In the second column, $G$ is the group where equivariance can be preserved.}\smallskip
	\centering
	\linespread{1.2}\selectfont
	\resizebox{1.0\columnwidth}{!}{ \smallskip
		\centering
		\linespread{1.3}\selectfont
		\begin{tabular}{l|cc|cc|c}
			\hline
			Method & $G$ & Depth& C10 & C100  & params\\
			\hline
			ResNet \cite{he2016identity}  & $\mathbb{Z}^2$ & 26  &11.5 & 31.66 & 0.37M\\
			HexaConv \cite{hoogeboom2018hexaconv} & $p6$  & 26& 9.98 &-  & 0.34M\\
			& $p6m$ & 26 & 8.64 &-  & 0.34M\\
			PDO-eConv (ours) & $p6$  & 26 & 5.65 & 27.13 & 0.36M \\
			& $p6m$  & 26& 5.38 & 27.00  & 0.37M\\
			\hline
			ResNet & $\mathbb{Z}^2$  & 44& 5.61 & 24.08  & 2.64M\\
			G-CNN \cite{cohen2016group} & $p4m$ & 44 & 4.94 & 23.19  & 2.62M\\
			PDO-eConv (ours) & $p8$  & 44& 3.68 & 20.01  & 2.62M\\
			\hline
			ResNet & $\mathbb{Z}^2$ & 1001 & 4.92 & 22.71 & 10.3M\\
			Wide ResNet \cite{zagoruyko2016wide} & $\mathbb{Z}^2$ & 26  & 4.00 & 19.25 & 36.5M\\
			G-CNN \cite{cohen2016group} & $p4m$ & 26 & 4.17 & -  & 7.2M\\
			\textbf{PDO-eConv (ours)} & $p8$ & 26 & \textbf{3.50} & \textbf{18.40}  & 4.6M\\
			\hline
		\end{tabular}
		\label{natural images}
	}
	\label{table1}
\end{table}

Following HexaConv, we use our PDO-eConvs to establish models that are equivariant to group $p6$ ($p6m$), where $n=4$ and $k_i=6,13,26$ ($k_i=6,9,18$). Using comparable numbers of parameters, our methods perform significantly better than HexaConv ($5.38\%$ vs. $8.64\%$ on C10). In addition, HexaConvs require extra memory to store hexagonal images while our PDO-eConvs do not need so. 

We evaluate PDO-eConvs using ResNet-44, where $n=7$ and $k_i = 11,23,45$. Compared with G-CNNs, our PDO-eConvs achieve significantly better performance using comparable numbers of parameters ($3.68\%$ vs. $4.94\%$ on C10, and $20.01\%$ vs. $23.19\%$ on C100). When evaluated on ResNet-26, where  $n=4,k_i=20,40,80$, PDO-eConv results in $3.50\%$ test error, much better than $4.17\%$ resulted from G-CNN, yet using much fewer parameters (4.6M vs. 7.2M). This is mainly because that PDO-eConvs can deal with an $8$-fold rotational symmetry, which exploit more rotational symmetries compared with G-CNN.

Finally, we compare our models with deeper ResNets (ResNet-1001) and wider ResNets (Wide ResNet). As shown in Table \ref{natural images}, PDO-eConvs perform betterr ($3.50\%$ vs. $4.00\%$  in C10 and $18.40\%$ vs. $19.25\%$ in C100) using only $12.6\%$ parameters (4.6M vs. 36.5M). Particularly, PDO-eConvs can also be viewed as introducing a weight sharing scheme across channels, and the results indicate that our method can not only save parameters, but also improve the performance remarkably.

\section{Conclusion}

We utilize PDOs to design a system which is exactly equivariant to a much more general continuous group, the $n$-dimension Euclidean group. We use numerical schemes to implement these PDOs and derive approximately equivariant convolutions, PDO-eConvs. Particularly, we provide an error analysis and show that the approximation error is of the quadratic order. Extensive experiments verify the effectiveness of our method.

In this work, we only conduct experiments on 2D images. Actually, our theory can deal with the data with any dimension. We will explore more possibilities in the future.

\section*{Acknowledgements}
This work was supported by the National Key Research and Development Program of China under grant 2018AAA0100205. Z. Lin is supported by NSF China (grant no.s 61625301 and 61731018), Major Scientific Research Project of Zhejiang Lab (grant no.s 2019KB0AC01 and 2019KB0AB02), Beijing Academy of Artificial Intelligence, and Qualcomm.

\bibliography{example_paper}
\bibliographystyle{icml2020}

\appendix
\section*{Supplementary Material}
\section{Numerical Schemes of Partial Differential Operators}
\subsection{Filters of Size $3\times 3$}

\begin{align*}
	& \widetilde u_0=\left[
	\begin{array}{p{0.7cm}<{\centering} p{0.7cm}<{\centering} p{0.7cm}<{\centering}}
		0&0&0\\
		0&1&0\\
		0&0&0
	\end{array}
	\right]\\
	&\widetilde u_x=\frac{1}{h}\left[
	\begin{array}{p{0.7cm}<{\centering} p{0.7cm}<{\centering} p{0.7cm}<{\centering}}
		0&0&0\\
		-1/2&0& 1/2\\
		0&0&0
	\end{array}
	\right]\\
	&\widetilde u_y=\frac{1}{h}\left[
	\begin{array}{p{0.7cm}<{\centering} p{0.7cm}<{\centering} p{0.7cm}<{\centering}}
		\quad 0\quad&1/2&  0\quad\\
		\quad 0\quad&0& 0\quad\\
		\quad 0\quad&-1/2&0\quad
	\end{array}
	\right]\\
	&\widetilde u_{xx}=\frac{1}{h^2}\left[
	\begin{array}{p{0.7cm}<{\centering} p{0.7cm}<{\centering} p{0.7cm}<{\centering}}
		0&0&0\\
		1&-2&1\\
		0&0&0
	\end{array}
	\right]\\
	&\widetilde u_{xy}=\frac{1}{h^2}\left[
	\begin{array}{p{0.7cm}<{\centering} p{0.7cm}<{\centering} p{0.7cm}<{\centering}}
		-1/4 &0& 1/4\\
		0&0&0\\
		1/4&0&-1/4
	\end{array}
	\right]\\
	&\widetilde u_{yy}=\frac{1}{h^2}\left[
	\begin{array}{p{0.7cm}<{\centering} p{0.7cm}<{\centering} p{0.7cm}<{\centering}}
		0&1&0\\
		0&-2&0\\
		0&1&0
	\end{array}
	\right]\\
	&\widetilde u_{xxy}=\frac{1}{h^3}\left[
	\begin{array}{p{0.7cm}<{\centering} p{0.7cm}<{\centering} p{0.7cm}<{\centering}}
		1/2&-1&1/2\\
		0& 0 &0\\
		-1/2 &1& -1/2
	\end{array}
	\right]\\
	&\widetilde u_{xyy}=\frac{1}{h^3}\left[
	\begin{array}{p{0.7cm}<{\centering} p{0.7cm}<{\centering} p{0.7cm}<{\centering}}
		-1/2&0&1/2\\
		1 & 0 & -1\\
		-1/2 &0 & 1/2
	\end{array}
	\right]\\
	&\widetilde u_{xxyy}=\frac{1}{h^4}\left[
	\begin{array}{p{0.7cm}<{\centering} p{0.7cm}<{\centering} p{0.7cm}<{\centering}}
		1&-2&1\\
		-2 & 4 & -2\\
		1 &-2 & 1
	\end{array}
	\right]
\end{align*}

\subsection{Filters of Size $5\times 5$}
\begin{align*}
	&\widetilde u_{xxx}=\frac{1}{h^3}\left[
	\begin{array}{p{0.7cm}<{\centering} p{0.7cm}<{\centering} p{0.7cm}<{\centering}p{0.7cm}<{\centering}p{0.7cm}<{\centering}}
		0&0&0&0&0\\
		0&0&0&0&0\\
		-1/2&1&0&-1&1/2\\
		0&0&0&0&0\\
		0&0&0&0&0\\
	\end{array}
	\right]\\
	& \widetilde u_{yyy}=\frac{1}{h^3}\left[
	\begin{array}{p{0.7cm}<{\centering} p{0.7cm}<{\centering} p{0.7cm}<{\centering}p{0.7cm}<{\centering}p{0.7cm}<{\centering}}
		0&0&1/2&0&0\\
		0&0&-1&0&0\\
		0&0&0&0&0\\
		0&0&1&0&0\\
		0&0&-1/2&0&0\\
	\end{array}
	\right]
\end{align*}

\begin{align*}
	& \widetilde u_{xxxx}=\frac{1}{h^4}\left[
	\begin{array}{p{0.7cm}<{\centering} p{0.7cm}<{\centering} p{0.7cm}<{\centering}p{0.7cm}<{\centering}p{0.7cm}<{\centering}}
		0&0&0&0&0\\
		0&0&0&0&0\\
		1&-4&6&-4&1\\
		0&0&0&0&0\\
		0&0&0&0&0\\
	\end{array}
	\right]\\
	& \widetilde u_{xxxy}=\frac{1}{h^4}\left[
	\begin{array}{p{0.7cm}<{\centering} p{0.7cm}<{\centering} p{0.7cm}<{\centering}p{0.7cm}<{\centering}p{0.7cm}<{\centering}}
		0&0&0&0&0\\
		-1/4&1/2&0&-1/2&1/4\\
		0&0&0&0&0\\
		1/4&-1/2&0&1/2&-1/4\\
		0&0&0&0&0\\
	\end{array}
	\right]\\
	& \widetilde u_{xyyy}=\frac{1}{h^4}\left[
	\begin{array}{p{0.7cm}<{\centering} p{0.7cm}<{\centering} p{0.7cm}<{\centering}p{0.7cm}<{\centering}p{0.7cm}<{\centering}}
		0&-1/4&0&1/4&0\\
		0&1/2&0&-1/2&0\\
		0&0&0&0&0\\
		0&-1/2&0&1/2&0\\
		0&1/4&0&-1/4&0\\
	\end{array}
	\right]\\
	& \widetilde u_{yyyy}=\frac{1}{h^4}\left[
	\begin{array}{p{0.7cm}<{\centering} p{0.7cm}<{\centering} p{0.7cm}<{\centering}p{0.7cm}<{\centering}p{0.7cm}<{\centering}}
		0&0&1&0&0\\
		0&0&-4&0&0\\
		0&0&6&0&0\\
		0&0&-4&0&0\\
		0&0&1&0&0\\
	\end{array}
	\right]
\end{align*}

\end{document}